\title[Adaptive Conformal Prediction for Motion Planning among  Dynamic Agents]{Adaptive Conformal Prediction for \\ Motion Planning among  Dynamic Agents}
\newcommand{\mathbold}[1]{\boldsymbol{#1}}
\newtheorem{problem}{Problem}
\author{%
 \Name{Anushri Dixit}$^{*}$$^{1}$ \Email{adixit@caltech.edu}%
 \AND
 \Name{Lars Lindemann}$^{*}$$^{2}$ \Email{larsl@seas.upenn.edu}%
 \AND
 \Name{Skylar X. Wei}$^{1}$ \Email{swei@caltech.edu}%
 \AND
  \Name{Matthew Cleaveland}$^{2}$ \Email{mcleav@seas.upenn.edu}%
 \AND  
 \Name{George J. Pappas}$^{2}$ 
 \Email{pappasg@seas.upenn.edu}%
 \AND
 \Name{Joel W. Burdick}$^{1}$\Email{jwb@robotics.caltech.edu}\\
 \addr $^{1}$California Institute of Technology, Pasadena, CA \\
 \addr $^{2}$University of Pennsylvania, Philadelphia, PA \\
 \addr $^{*}$ Indicates equal contribution%
}
\begin{document}

\maketitle
\vspace{-14mm}
\begin{abstract}%
This paper proposes an algorithm for motion planning among dynamic agents using adaptive conformal prediction. We consider a deterministic control system and use trajectory predictors to predict the dynamic agents' future motion, which is assumed to follow an unknown distribution. We then leverage ideas from adaptive conformal prediction to dynamically quantify prediction uncertainty from an online data stream. Particularly, we provide an online algorithm that uses delayed agent observations to obtain uncertainty sets for multistep-ahead predictions with probabilistic coverage. These uncertainty sets are used within a model predictive controller to safely navigate among dynamic agents. While most existing data-driven prediction approaches quantify prediction uncertainty heuristically, we quantify the true prediction uncertainty in a distribution-free, adaptive manner that even allows to capture changes in prediction quality and the agents' motion.  We empirically evaluate  our algorithm on a  case study where a drone avoids a flying frisbee.
\end{abstract}

\begin{keywords}%
MPC, dynamic environments, uncertainty quantification, and conformal prediction.
\end{keywords}
%\vspace{-2mm}
\section{Introduction}
Motion planning of autonomous systems in dynamic environments requires the system to  reason about uncertainty in its environment, e.g., a self-driving car needs to reason about uncertainty in the motion of other vehicles, and a mobile robot navigating a crowded space needs to assess uncertainty of nearby pedestrians. These applications are safety critical, as the agents’ intentions are unknown, and systems must be able to plan reactive behaviors in response to an increase in uncertainty. 
 
 Existing works include predictive and reactive approaches, e.g., multi-agent navigation via the dynamic window approach \cite{fox1997dynamic,mitsch2013provably} or navigation functions \cite{dimarogonas2006feedback,tanner2003nonholonomic}. Reactive approaches typically consider simplified dynamics and do not optimize performance. Predictive approaches incorporate predictions of the agents' future motion and can optimize performance. Interactive approaches  take inter-agent interaction into account \cite{kretzschmar2016socially,everett2021collision}, while non-interactive approaches ignore potential interactions \cite{trautman2010unfreezing,du2011robot}.

While many prior works assume perfect knowledge of the environment, an important challenge is to account for uncertainty in perception. Existing works address the problem by making simplifying assumptions, such as linear system dynamics and bounded or Gaussian uncertainty distributions \cite{aoude2013probabilistically,thomas2021probabilistic,renganathan2020towards}. However, addressing the problem in its full generality for nonlinear dynamics and arbitrary distributions is an open problem.

\begin{wrapfigure}[8]{r}{0.3\textwidth}
\vspace{-2ex}
    \includegraphics[scale=0.205]{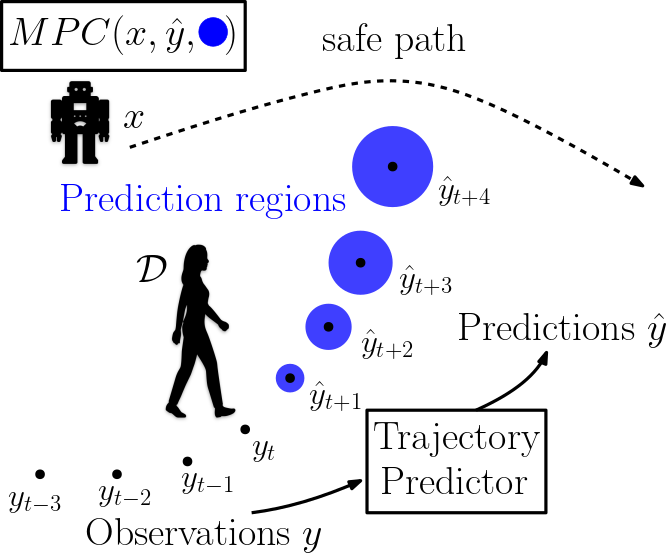}
  \label{fig:intro_figure}
\end{wrapfigure}
 In this paper, we use trajectory predictors to predict the agents’ future motion, and quantify prediction uncertainty in an adaptive and online manner from past agent observations of a single trajectory. Particularly, we use tools from the adaptive conformal prediction (ACP) literature \cite{gibbs2021adaptive,gibbs2022conformal,zaffran2022adaptive,bastani2022practical} to construct prediction regions that quantify multistep-ahead prediction uncertainty. Based on this quantification, we formulate an uncertainty-informed motion planner. Our contributions are as follows:
 % \footnote{\textcolor{red}{Proofs and more information on the trajectory predictor used can be found in ...}}:
\begin{itemize}
    \item We propose an algorithm that adaptively quantifies uncertainty of trajectory predictors using ACP. Our algorithm is distribution-free and applies to a broad class of trajectory predictors, providing average probabilistic coverage. 
    \item We propose a model predictive controller (MPC) that leverages uncertainty quantifications to plan probabilistically safe paths around dynamic obstacles. Importantly, our adaptive algorithm enables us to capture and react to changes in prediction quality and the agents’ motion.
    \item We provide empirical evaluations of a drone avoiding a flying frisbee.
\end{itemize}

\subsection{Related Work}

%Model Predictive Control (MPC) is a popular method account for the dynamic changes in the environment while accounting for the agent dynamics, state constraints, and control constraints through an optimization-based framework as described by~\cite{borrelli2003constrained}. However, to ensure that this optimization to be tractable for real-time application, the state-of-the-art approaches like~\cite{fan2021step, } assume that the nature of the uncertainty is either bounded or Gaussian. If  in the environment may have, we can account for the worst-case motion in the planning framework. However, robust MPC approaches like in~\cite{bemporad1999robust} are often too conservative and are infeasible to use in the real-world with many dynamic agents. 

Planning in dynamic environments has found broad interest, and non-interactive sampling-based motion planner were presented in \cite{phillips2011sipp,renganathan2022risk,aoude2013probabilistically,majd2021safe,kantaros2022}, while \cite{du2011robot,lcss_ours,wang2022group,thomas2021probabilistic} propose non-interactive receding horizon planning algorithms. However, accounting for uncertainty in the agent motion is challenging. 

Intent-driven models for planning among human agents have estimated agent uncertainty using Bayesian inference  \cite{fisac2018probabilistically,nakamura2022online,fridovich2020confidence,bansal2020hamilton}. Model predictive control was also used in a stochastic setting to account for uncertainty under the assumption of bounded or Gaussian uncertainty \cite{fan2021step,nairMultimodalMPC2022, yoonGPEKF2021}. Data-driven trajectory predictors can provide mean and variance information of the predictions, which can be approximated as a Gaussian distribution \cite{buschGPpred2022} and used within stochastic planning frameworks \cite{choi2017real,omainska2021gaussian,fulgenzi2008probabilistic}. These approaches quantify prediction uncertainty in a heuristic manner for real systems as the authors make certain assumptions  on prediction algorithms and agent models and its distribution, e.g., being Gaussian. Distributionally robust approaches such as \cite{lcss_ours} are distribution free and can ensure safety at the cost of conservatism.

Data-driven trajectory predictors, such as RNNs or LSTMs, provide no information about prediction uncertainty which can lead to unsafe decisions. For this reason, prediction monitors were recently presented in \cite{farid2022task,luo2021sample} to monitor prediction quality. Especially \cite{luo2021sample}  used conformal prediction to obtain guarantees on the predictor's false negative rate. Conformal prediction was further used to obtain estimates on constraint satisfaction via neural network predictors  \cite{dietterich2022conformal,bortolussi2019neural,qin2022statistical,lindemann2022conformal}. Conceptually closest to our work are \cite{chen2020reactive,lindemann2022safe} where prediction uncertainty quantifications are  obtained using conformal prediction, and then utilized to design model predictive controllers. While the algorithm in \cite{chen2020reactive} can not provide end-to-end safety guarantees,  \cite{lindemann2022safe} can provide probabilistic safety guarantees for the planner. However, changes in the distribution that describes the agents' motion can not be accounted for, e.g., when the agents' motion changes depending on the motion of the control system. Another distinct difference is that offline trajectory data is needed, while we obtain uncertainty quantifications in an adaptive manner from past agent observations of a single trajectory.

\section{Problem Formulation and Preliminaries}
The dynamics of our autonomous system are governed by the discrete-time dynamical system,
\begin{align}
\label{eq:system}
    x_{t+1}=f(x_t,u_t), \;\;\; x_0:=\zeta
\end{align}
where $x_t\in\mathcal{X}\subseteq\mathbb{R}^n$ and $u_t\in \mathcal{U}\subseteq \mathbb{R}^m$ denote the state and the control input at time $t\in\mathbb{N}\cup \{0\}$, respectively. The sets~ $\mathcal{U}$ and $\mathcal{X}$ denote the set of permissible control inputs and the workspace of the system, respectively. The measurable function $f:\mathbb{R}^n\times\mathbb{R}^m\to \mathbb{R}^n$ describes the system dynamics and $\zeta\in\mathbb{R}^n$ is the initial condition of the system. For brevity, let $x:=(x_0,x_1,\hdots)$ denote the trajectory of \eqref{eq:system} under a given control sequence $u:=(u_0,u_1,\hdots)$.

The system operates in an environment with $N$  dynamic agents whose trajectories are a priori unknown. Let $\mathcal{D}(x)$ be an unknown distribution over agent trajectories, i.e., let 
$
    Y:=(Y_0,Y_1,\hdots)\sim \mathcal{D}(x)
$
describe a random trajectory where the joint agent state $Y_t:=(Y_{t,1},\hdots,Y_{t,N})$ at times $t\in\mathbb{N}\cup \{0\}$ is drawn from $\mathbb{R}^{Nn}$, i.e., $Y_{t,j}$ is the state of agent $j$ at time $t$. For instance, $Y_t$ can denote the uncertain two-dimensional positions of $N$ pedestrians at time $t$. Modeling dynamic agents by a distribution $\mathcal{D}$ provides great flexibility, and $\mathcal{D}$ can generally describe the motion of Markov decision processes.  
% Importantly, note the parameterization of the distribution $\mathcal{D}(x)$ by the trajectory $x$ of the system in \eqref{eq:system} which allows to model interactions between the system and the agents. We assume that this interaction is causal, i.e., that $\mathcal{D}(x)$ is such that the random variable $Y_t$ only depends on $(x_0,\hdots,x_t)$. 
We use lowercase letters $y_t$ when referring to a  realization of  $Y_t$, and assume at time $t$ to have access to past observations  $(y_0,\hdots,y_t)$.   We make no other assumptions on the distribution $\mathcal{D}$, and in our proposed algorithm we will predict states $(y_{t+1},\hdots,y_{t+H})$ for a prediction horizon of $H$ from $(y_0,\hdots,y_t)$ and quantify prediction uncertainty using ideas from ACP.

\begin{problem}\label{prob1} Given the system in \eqref{eq:system}, the unknown random trajectories $Y\sim\mathcal{D}(x)$, and a failure probability $\delta\in(0,1)$, design the control inputs $u_t$ such that the Lipschitz continuous constraint function $c:\mathbb{R}^n\times\mathbb{R}^{nN}\to \mathbb{R}$ is satisfied\footnote{For an obstacle avoidance constraint, like $c(x,y):= \lVert x - y \rVert -0.5 \geq 0 $, the Lipschitz constant is 1. We implicitly assume that the constraint function is initially satisfied, i.e., that $c(x_0,y_0)\ge 0$.} with a probability of at least $1-\delta$ at each time, i.e., that
\vspace{-0.1cm}
\begin{align}\label{eq:safety_constr}
    \textrm{Prob}\big(c(x_\tau,Y_\tau)\ge 0\big)\ge 1-\delta\;\;\; \text{ for all } \;\;\; \tau\ge 0.
\end{align}
\end{problem}

 We note that our previous work \cite{lindemann2022safe} considers a similar problem formulation. However, in \cite{lindemann2022safe}, we assume that the distribution $\mathcal{D}$ is stationary and it does not depend on the system trajectory $x$ or the environment, i.e., there is no interaction between the control system and the dynamic agents. In reality, however, a pedestrian may come to a halt if a mobile robot comes too close, resulting in a distribution shift in $\mathcal{D}$. This work is a step towards the implementation of a general framework that can adapt to such changes in the agent distribution. 
 
To address Problem \ref{prob1}, we use trajectory predictors to predict the motion of the agents $(Y_0,Y_1,\hdots)$  to enforce the constraint \eqref{eq:safety_constr} within a MPC framework. In \cite{lindemann2022safe}, we assumed the availability of validation data from $\mathcal{D}$ to build prediction regions that quantify uncertainty of trajectory predictors. In this setting, we can collect data online to adapt our uncertainty sets based on past performance of our predictor using ACP without any assumptions on the distribution of the uncertainty and exchangeability of the validation and training dataset.
%\vspace{-2mm}
\begin{remark}
By parameterizing the distribution $\mathcal{D}(x)$ by the trajectory $x$, we model potential interactions between system and  agents. This way, we can adapt to cases where the trajectory predictor (introduced next) is trained without information of $x$, i.e., without taking interactions into account. 
\end{remark}
%\vspace{-5mm}
\textbf{Trajectory Predictors:} Given observations $(y_0,\hdots,y_t)$ at time $t$, we want to predict future states $(y_{t+1},\hdots,y_{t+H})$ for a prediction horizon of $H$. Assume  that \textsc{Predict} is a function that maps observations $(y_{0},\hdots,y_t)$ to predictions $(\hat{y}_t^1,\hdots,\hat{y}_t^H)$ of $(y_{t+1},\hdots,y_{t+H})$. Note that $t$ in $\hat{y}_t^\tau$ denotes the time at which the prediction is made, while $\tau$ indicates how many steps we predict ahead. In principle, \textsc{Predict} can be a classical auto-regressive model or a  neural network based method. 

 %In particular, we cover compare three probabilistic time-series predictor models: (1) Double integrator reduce order approximation with numerically differentiated accelerations  (2) Autoregressive Integrated Moving Average Model (ARIMA) by \cite{box2015arima}, (3) Sliding linear predictor with extended Kalman filter, and (4) Long Short-Term Memory (LSTM) adapted from \cite{hochreiter1997LSTM}.

While our proposed problem solution is compatible with any trajectory predictor \textsc{Predict}, we focus in the case studies on real-time updating strategies like sliding linear predictors with extended Kalman filter. Extracting a dynamics model from data is challenging, especially when the available data is limited, noisy, and partial. 
\cite{takens1981detecting} showed that the method of delays can be used to reconstruct qualitative features of the full-state, phase space from delayed partial observations. By building on our previous work using time delay embedding in dynamic obstacle avoidance (\cite{lcss_ours}), we employ a linear predictor based on spatio-temporal factorization of the delayed partial observations as the pairing trajectory predictor (See Appendix~\ref{appendix:traj_pred}).
% (See \href{https://www.dropbox.com/sh/yg4aq7vmawq8v3e/AADIGwMtLXzx5JNQFjieANREa?dl=0&preview=Adaptive+Conformal+Prediction+for+Motion+Planning+among+Dynamic+Agents+(extended).pdf}{Appendix in extended version}). 
% \textcolor{black}{Another predictor that we use in the case studies is the social long short-term memory network from \cite{alahi2016social}. }
%\vspace{-3mm}

\noindent \textbf{Adaptive Conformal Prediction (ACP):} Conformal prediction is used to obtain  prediction regions for  predictive models, e.g., neural networks, without making assumptions on the underlying distribution or the predictive model \cite{vovk2005algorithmic,shafer2008tutorial,angelopoulos2021gentle}. Let $R_1,\hdots,R_{t+1}$ be $t+1$ independent and identically distributed (i.i.d.) random variables. The goal in conformal prediction is to obtain a prediction region of $R_{t+1}$ based on $R_1,\hdots,R_{t}$. Formally, given a failure probability $\delta\in (0,1)$, we want to obtain a  prediction region $C$ such that
\begin{align*}
    \textrm{Prob}(R_{t+1}\le C)\ge 1-\delta.
\end{align*}

We refer to $R_i$ also as the  nonconformity score. For supervised learning, we can select $R_i:=\|Z_i-\mu(X_i)\|$ where $\mu$ is the predictor so that a large nonconformity score indicates a poor predictive model. By a  quantile argument, see \cite[Lemma 1]{tibshirani2019conformal}, we can obtain $C$ to be the $(1-\delta)$th quantile of the empirical distribution of the values $R_1,\hdots,R_{t}$ and $\infty$. Calculating the $(1-\delta)$th quantile can be done by assuming that $\bar{R}_1,\hdots,\bar{R}_{t}$ correspond to the values of $R_1,\hdots,R_{t}$, but instead sorted in non-decreasing order ($\bar{R}$ refers to the order statistic of $R$), i.e., for each $\bar{R}_i$ there exists exactly one $R_j$ such that $\bar{R}_i=R_j$ and $\bar{R}_{i+1}\ge \bar{R}_i$. By setting $q:=\lceil (t+1)(1-\delta)\rceil\le t$, we  obtain the $(1-\delta)$th quantile as $C:=\bar{R}_{q}$, i.e., the $q^{\text{th}}$ smallest nonconformity score. 

The underlying assumption in conformal prediction is that $R_1,\hdots,R_{t+1}$ are exchangeable (exchangeability includes i.i.d. data). This is an unreasonable assumption for time-series prediction where $R_t$ may denote the nonconformity score at time $t$. To address this issue, ACP was introduced in \cite{gibbs2021adaptive,gibbs2022conformal,zaffran2022adaptive,bastani2022practical}. The idea is now to obtain a prediction region $C_{t+1}$ adaptively so that $\textrm{Prob}(R_{t+1}\le C_{t+1})\ge 1-\delta$ for each time $t$. In fact, the prediction region is now obtained as $C_{t+1}:=\bar{R}_{q_{t+1}}$ where $q_{t+1}:=\lceil (t+1)(1-\delta_{t+1})\rceil$ depends on the variable $\delta_{t+1}$ that is adapted online based on observed data.  In this way, the prediction region $C_{t+1}$ becomes a tuneable parameter by the choice of $\delta_{t+1}$. To adaptively obtain the parameter $\delta_{t+1}$, ideas from online learning are used and we update $\delta_{t+1}$ as
\begin{align}\label{eq:adapt_upd_rule}
    \delta_{t+1}:=\delta_{t}+\gamma(\delta-e_{t})
    \; \text{ with }\; e_{t}:=\begin{cases}
    0 &\text{if }\, r_{t}\le C_{t}\\
    1 &\text{otherwise}
    \end{cases}
\end{align}
where we denote by $r_{t}$ the observed realization of $R_{t}$ and where $\gamma$ is a learning rate. The idea is to use $\delta_{t+1}$ to adapt to changes in the distribution of $R_1,\hdots,R_{t+1}$ over time by using information on how much the prediction region $C_{t}$ overcovered ($r_{t}\ll C_{t}$) or undercovered ($r_{t}\gg C_{t}$) in the past. 
%\vspace{-2mm}
\begin{remark}
One of the main performance enhancers is the proper choice of $\gamma$. In \cite{gibbs2022conformal}, the authors present fully adaptive conformal prediction (FACP) where a set of learning rates $\{\gamma_i\}_{1\leq i \leq k}$ is used in parallel from which the best $\gamma$ is selected adaptively. Based on past performance (using a reweighting scheme that evaluates which $\gamma_i$ provided the best coverage), the authors maintain a belief $p_t^{(i)}$ at each time step $t$ for each  $\{\delta_t^{(i)}\}_{1\leq i \leq k}$. The new update laws are
$$
        \delta^{(i)}_{t+1}:=\delta^{(i)}_{t}+\gamma_i(\delta-e^{(i)}_{t})
    \; \text{ with }\; e^{(i)}_{t}:=\begin{cases}
    0 &\text{if }\, r_{t}\le C^{(i)}_{t}\\
    1 &\text{otherwise}
    \end{cases}$$
where the individual prediction regions are $C^{(i)}_{t}:=\bar{R}_{q^{(i)}_{t}}$ with $q^{(i)}_{t}:=\lceil (t+1)(1-\delta^{(i)}_{t})\rceil$, while the best prediction region is  $C_{t}:=\bar{R}_{q_{t}}$ with $q_{t}:=\lceil (t+1)(1-\sum_{i=1}^{k}p_t^{(i)}\delta^{(i)}_{t})\rceil$.
% \textcolor{red}{TODO: Briefly describe FACI}
\end{remark}
%\vspace{-5mm}
\section{Adaptive Conformal Prediction Regions for Trajectory Predictions}

Recall that we can obtain predictions  $(\hat{y}_t^1,\hdots,\hat{y}_t^H)$  at time $t$ of future agent states  $(Y_{t+1},\hdots,Y_{t+H})$ from past observations $(y_{0},\hdots,y_t)$ using the \textsc{Predict} function. Note, however, that these point predictions contain no information about prediction uncertainty and can hence not be used to  reason about the safety constraint \eqref{eq:safety_constr}. To tackle this issue, we aim to construct prediction regions for $(Y_{t+1},\hdots,Y_{t+H})$ using ideas from ACP. 

To obtain  prediction regions for $(Y_{t+1},\hdots,Y_{t+H})$,  we could consider the nonconformity score $\|Y_{t+\tau}-\hat{y}_t^\tau\|$ at time $t$ that captures the multistep-ahead prediction error for each $\tau\in\{1,\hdots,H\}$. A large nonconformity score indicates that the prediction $\hat{y}_t^\tau$ of $Y_{t+\tau}$ is not accurate, while a small score indicates an accurate prediction. For each $\tau$, we wish to obtain a prediction region $C_t^\tau$ that is again defined by an update variable $\delta_t^\tau$. Note, however, that we can not evaluate $\|y_{t+\tau}-\hat{y}_t^\tau\|$ at time $t$ as only measurements $(y_0,\hdots,y_t)$ are known, but not $(y_{t+1},\hdots,y_{t+H})$. Consequently, we cannot use the update rule \eqref{eq:adapt_upd_rule} to update $\delta_{t}^\tau$, as the error $e_t^\tau$ would depend on checking if $\|y_{t+\tau}-\hat{y}_{t}^\tau\|\le C_{t}^\tau$. To address this issue, we define the time  lagged nonconformity score
\begin{align*}
    R_t^\tau:=\|Y_{t}-\hat{y}_{t-\tau}^\tau\|
\end{align*}
that we can evaluate at time $t$ so that we can use the update rule \eqref{eq:adapt_upd_rule}. This nonconformity score $R_t^\tau$ is time lagged in the sense that, at time $t$, we evaluate the $\tau$ step-ahead prediction error that was made $\tau$ time steps ago. We can now update the parameter $\delta_{t+1}^\tau$ that defines $C_{t+1}^\tau$ as
\begin{align}\label{eq:recursion}
    \delta_{t+1}^\tau:=\delta_{t}^\tau+\gamma(\delta-e_{t}^\tau)
    \; \text{ with }\; e_{t}^\tau:=\begin{cases}
    0 &\text{if }\, \|y_{t}-\hat{y}_{t-\tau}^\tau\|\le C_{t}^\tau\\
    1 &\text{otherwise.}
    \end{cases}
\end{align}
 
To  compute the prediction region $C_{t+1}^\tau$, note that we can not compute $R_1^\tau,\hdots,R_{\tau-1}^\tau$. Therefore, with minor change, we let $C_{t+1}^\tau$ be the $\lceil(t-\tau+1)(1-\delta_{t+1}^\tau)\rceil^{\text{th}}$ smallest value of $(R_\tau^\tau,\hdots,R_{t}^\tau)$\footnote{Instead of keeping track of all  data, we will choose  a sliding window of the $N$ most recent data. For all prediction regions, we will then consider $(R_{t-N}^\tau,\hdots,R_{t}^\tau)$ and compute $C_{t+1}^\tau$ as the $\lceil(N+1)(1-\delta_{t+1}^\tau)\rceil^{\text{th}}$ smallest value.}. 

By obtaining a prediction region for $R_{t+1}^\tau$ using ACP, we obtain a prediction region for the $\tau$ step-ahead prediction error that was made $\tau-1$ time steps ago, i.e., for $\|Y_{t+1}-\hat{y}_{t+1-\tau}^\tau\|$. Under the assumption that $R_{t+1}^\tau$ and $R_{t+\tau}^\tau$ are independent and identically distributed, $R_{t+1}^\tau$ serves as a prediction region for $\tau$ step-ahead prediction error that was made $0$ time steps ago (now at time $t$), i.e., for $R_{t+\tau}^\tau$ which encodes $\|Y_{t+\tau}-\hat{y}_{t}^\tau\|$. Naturally, in our setting $R_{t+1}^\tau$ and $R_{t+\tau}^\tau$ are not independent and identically distributed, but it still serves as a good measure for the prediction region $R_{t+\tau}^\tau$. We remark that for the theoretical guarantees that we provide in the next section, only the one step-ahead prediction errors are relevant. 
\begin{theorem}\label{thm:1}
   Let $\gamma$ be a learning rate, $\delta_0^1\in (0,1)$ be an initial value for the recursion \eqref{eq:recursion}, and $T$ be the number of times that we compute the recursion \eqref{eq:recursion}. Then, for the onestep-ahead prediction errors, it holds that
     \begin{align}\label{eq:thm1}
 1-\delta-p_1 \leq \frac{1}{T}\sum_{t=0}^
{T-1} \textrm{Prob}(\|Y_{t+1}-\hat{y}_{t}^1\|\le C_{t+1}^1)\leq 1-\delta+p_2
     \end{align}
     with constants $p_1:=\frac{\delta_0^1+ \gamma}{T\gamma}$, $p_2:=\frac{(1-\delta_0^1)+ \gamma}{T\gamma}$ so that $\lim_{T\rightarrow\infty}p_1 =0$ and $\lim_{T\rightarrow\infty}p_2 =0$.
\end{theorem}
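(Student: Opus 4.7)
My plan is to mimic the coverage analysis for adaptive conformal prediction due to \cite{gibbs2021adaptive}, specialized to the one-step-ahead case $\tau=1$ in the recursion \eqref{eq:recursion}. The argument decomposes into three ingredients: a telescoping identity that expresses the empirical miscoverage frequency in terms of $\delta_T^1-\delta_0^1$, a stability bound showing that $\delta_t^1$ cannot drift far outside $[0,1]$, and a final linearity-of-expectation step.

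First, I would sum \eqref{eq:recursion} with $\tau=1$ over the $T$ iterations of the update. The recursion telescopes to
\[
\delta_T^1 - \delta_0^1 \;=\; \gamma \sum_{t=0}^{T-1} (\delta - e_t^1).
\]
By the definition of $e_t^1$, one has $\mathbb{E}[e_{t+1}^1] = 1 - \textrm{Prob}(\|Y_{t+1}-\hat{y}_t^1\|\le C_{t+1}^1)$. Rearranging the telescope, dividing by $T$, taking expectations, and reindexing by one step to align with the sum in \eqref{eq:thm1} yields the exact identity
\[
\frac{1}{T}\sum_{t=0}^{T-1} \textrm{Prob}(\|Y_{t+1}-\hat{y}_t^1\|\le C_{t+1}^1) \;=\; 1 - \delta + \frac{\mathbb{E}[\delta_T^1] - \delta_0^1}{T\gamma}.
\]

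Second, I would establish by induction the stability invariant $\delta_t^1 \in [-\gamma,\,1+\gamma]$ for every $t$, starting from $\delta_0^1\in(0,1)$. Each update moves $\delta_t^1$ by at most $\gamma$ in either direction since $|\delta-e_t^1|\le 1$, so one step from inside $[0,1]$ cannot exit the enlarged interval. On the boundary, the quantile convention is crucial: when $\delta_t^1<0$, the index $\lceil(t-\tau+1)(1-\delta_t^1)\rceil$ exceeds the sample size, so $C_t^1=+\infty$, forcing $e_t^1=0$ and hence $\delta_{t+1}^1=\delta_t^1+\gamma\delta$; symmetrically, when $\delta_t^1>1$ the ceiling is nonpositive, so $C_t^1=-\infty$, forcing $e_t^1=1$ and $\delta_{t+1}^1=\delta_t^1-\gamma(1-\delta)$. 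A case split on whether $\delta_t^1\in[-\gamma,0)$, $[0,1]$, or $(1,1+\gamma]$ then closes the induction.

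Finally, plugging the (almost surely) deterministic bounds $-\gamma\le\delta_T^1\le 1+\gamma$, which pass through the expectation, into the identity from the first step gives
\[
1-\delta - \frac{\gamma+\delta_0^1}{T\gamma} \;\le\; \frac{1}{T}\sum_{t=0}^{T-1} \textrm{Prob}(\|Y_{t+1}-\hat{y}_t^1\|\le C_{t+1}^1) \;\le\; 1-\delta + \frac{\gamma+(1-\delta_0^1)}{T\gamma},
\]
which is precisely \eqref{eq:thm1} with the stated $p_1$ and $p_2$, both of order $1/T$. I expect the stability claim in the second step to be the main obstacle, because it hinges on the quantile-at-infinity convention when $\delta_t^1$ leaves $[0,1]$, and getting the case split right on both boundaries requires a little care; everything else is an algebraic telescope followed by linearity of expectation.
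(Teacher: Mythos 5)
Your proof is correct and follows essentially the same route as the paper: you use the identity $\textrm{Prob}(\|Y_{t+1}-\hat{y}_t^1\|\le C_{t+1}^1)=1-\mathbb{E}[e_{t+1}^1]$ and then take expectations of a pathwise bound on the average errors, exactly as in the paper's proof. The only difference is that the paper invokes this pathwise bound as a black box from \cite[Proposition 4.1]{gibbs2021adaptive}, whereas you re-derive it inline via the telescoping identity $\delta_T^1-\delta_0^1=\gamma\sum_{t}(\delta-e_t^1)$ and the stability invariant $\delta_t^1\in[-\gamma,1+\gamma]$ (relying on the quantile-at-infinity convention when $\delta_t^1$ exits $[0,1]$), which is precisely how that cited result is proved, so your argument is simply a self-contained version of the same approach.
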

\begin{proof}
     Since the probability of an event is equivalent to the expected value of the indicator function of that event, it follows by the definition of the error $e_{t+1}^1$ that
\begin{align}\label{eq:proof1}
    \textrm{Prob}(\|Y_{t+1}-\hat{y}_{t}^1\|\le C_{t+1}^1)= \mathbb{E}[1-e_{t+1}^1] = 1-\mathbb{E}[e_{t+1}^1].
    %\mathbb{P}(\|y_{t}-\hat{y}_{t-\tau}^\tau\|\le C_{t-\tau-1}^\tau \, | \, )
\end{align}

For a given initialization $\delta_0^\tau$ and learning rate $\gamma$, we know from~\cite[Proposition 4.1]{gibbs2021adaptive} that the following bound holds (with probability one) for the misclassification errors
\begin{align*}
\;\;\;\frac{-(1-\delta_0^1)+ \gamma}{T\gamma}&\leq \frac{1}{T}\sum_{t=0}^
{T-1} e_{t+1}^1 -\delta \leq \frac{\delta_0^1+ \gamma}{T\gamma}
\implies \Big|\frac{1}{T}\sum_{t=0}^
{T-1} e_{t+1}^1 -&\delta\Big| \leq \frac{\max(\delta_0^1,1-\delta_0^1)+ \gamma}{T\gamma}.
\end{align*}
Hence, taking the expectation of the above two-sided inequality, we get that
\begin{align*}
    \frac{-(1-\delta_0^1)+ \gamma}{T\gamma} &\leq \frac{1}{T}\sum_{t=0}^
{T-1} \mathbb{E}[e_{t+1}^1] -\delta \leq \frac{\delta_0^1+ \gamma}{T\gamma},\\
\overset{(a)}{\Leftrightarrow}\;\;\;\frac{-(1-\delta_0^1)+ \gamma}{T\gamma} &\leq\frac{1}{T}\sum_{t=0}^
{T-1} \big(1-\textrm{Prob}(\|Y_{t+1}-\hat{y}_{t}^1\|\le C_{t+1}^1)\big)-\delta \leq  \frac{\delta_0^1+ \gamma}{T\gamma},\\
\Leftrightarrow\;\;\;1-\delta + \frac{(1-\delta_0^1)+ \gamma}{T\gamma} &\geq \frac{1}{T}\sum_{t=0}^
{T-1} \textrm{Prob}(\|Y_{t+1}-\hat{y}_{t}^1\|\le C_{t+1}^1) \geq 1-\delta - \frac{\delta_0^1+ \gamma}{T\gamma},
\end{align*}
where we used equation \eqref{eq:proof1} for the equivalence in (a).
\end{proof}
%\vspace{-5mm}
\begin{remark}
The above result can be similarly extended to the FACP case with a set of candidate learning rates, $\gamma$, \cite[Theorem 3.2]{gibbs2022conformal}.
\end{remark}
%\vspace{-5mm}
% \textcolor{red}{TODO: Briefly remark FACI extension}
\begin{example}
To illustrate these multistep-ahead prediction regions, consider a planar double pendulum whose dynamics are governed by chaotic, nonlinear dynamics that are sensitive to the initial condition~\cite[]{shinbrotDP1992}.
% \begin{align*}
%     x_1 &= \frac{l}{2}\sin{\theta_1}, &y_1 = -\frac{l}{2}\cos{\theta_1}, \\
%     x_2 &= l\big(\sin{\theta_1} + \frac{1}{2}\sin{\theta_2}\big), &y_2 = -l \big(\cos{\theta_1} + \frac{1}{2}\cos{\theta_2}\big), \\
%     \dot{\theta_1} &= \frac{6}{ml^2}\frac{2p_{\theta_1} - 3\cos{(\theta_1-\theta_2)}p_{\theta_2}}{16 - 9\cos^2{(\theta_1-\theta_2)}}, & \dot{\theta_2} = \frac{6}{ml^2}\frac{8p_{\theta_2} - 3\cos{(\theta_1-\theta_2)}p_{\theta_1}}{16 - 9\cos^2{(\theta_1-\theta_2)}}, \\
%     \dot{p}_{\theta_1}&= -\frac{1}{2}ml^2\bigg(\dot{\theta}_1\dot{\theta}_2\sin{(\theta_1 - \theta_2)} + 3\frac{g}{l}\sin{\theta_1}\bigg), & \dot{p}_{\theta_2}= -\frac{1}{2}ml^2\bigg(-\dot{\theta}_1\dot{\theta}_2\sin{(\theta_1 - \theta_2)} + \frac{g}{l}\sin{\theta_1}\bigg).
% \end{align*}
% The pendulum motion is sensitive to the initial condition and is chaotic. 
We study the predictions made by a linear predictor that uses noisy observations of the position of the double pendulum (See Appendix~\ref{appendix:traj_pred})
% (See \href{https://www.dropbox.com/sh/yg4aq7vmawq8v3e/AADIGwMtLXzx5JNQFjieANREa?dl=0&preview=Adaptive+Conformal+Prediction+for+Motion+Planning+among+Dynamic+Agents+(extended).pdf}{Appendix in extended version}) 
and use ACP to predict the uncertainty in the predictions. Both the trajectory predictor and the uncertainty quantification using ACP use online data from a single trajectory. ACP provides the multi-step errors in the linear predictions with a coverage level of $\delta = 0.1$, and learning rates $\gamma = \begin{pmatrix}
    0.0008 & 0.0015 & 0.003 & 0.005& 0.009 & 0.017 &0.03 & 0.05 &0.08
\end{pmatrix}.$

Figure~\ref{fig:multi_step_DP} compares the 1-step and 6-step ahead error prediction regions to the true multi-step errors for two states, the second mass position, $x_2, y_2$. The percentages of one-step errors that are incorrectly predicted, i.e., $e_t^1 = 1$, for the positions of each mass, $x_1, x_2, y_1, y_2$ are $2.36\%, 0.94\%, 1.57 \%, 1.73\%$  respectively. We can see the effects of adaptation as the ACP prediction regions are larger in areas of poor performance of the linear predictor (and consequently higher error in the prediction) and smaller in regions where the linear predictor performs well.

\begin{figure*}[h!]
    %\vspace{-4mm}
    \centering{
\includegraphics[width=0.85\linewidth]{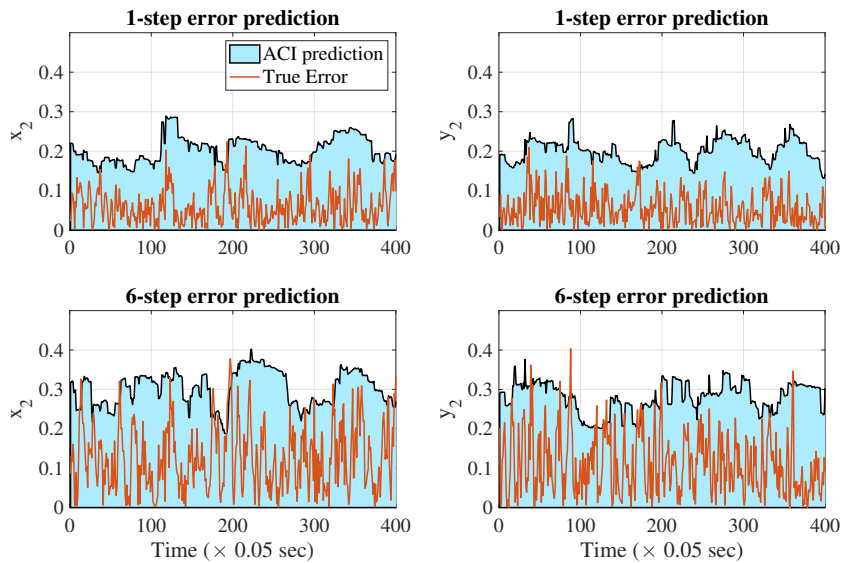}
    \vspace{-5mm}
    \caption{The multi-step prediction errors are shown for two of the six states of a double pendulum ($x_2, y_2$). ACP can correctly predict regions of high and low error ($90\%$ coverage regions) by adjusting the prediction quantile using update law (\ref{eq:adapt_upd_rule}). The orange lines are the true multi-step prediction errors and the blue areas are the error regions predicted by ACP.}}
    \vspace{-8mm}
    \label{fig:multi_step_DP}
\end{figure*}
\end{example}

% \begin{figure*}[ht!]
%     \vspace{-5mm}
%     \centering
%     \includegraphics[width=\textwidth]{}
%     \vspace{-10mm}
%     \caption{One-step prediction error (orange) compared to the ACI error prediction region (blue).}
%     \label{fig:one_step_DP}
% \end{figure*}
\section{Uncertainty-Informed Model Predictive Control}

Based on the obtained uncertainty quantification from the previous section, we propose an uncertainty-informed model predictive controller (MPC) that uses predictions $\hat{y}_t^\tau$ and adaptive prediction regions $C_{t+1}^\tau$. The underlying optimization problem that is solved at every time step $t$ is:
\begin{subequations}\label{eq:open_loop}
%\vspace{-2mm}
\begin{align}
    \min_{(u_t,\hdots,u_{t+H-1})}& \sum_{k=t}^{t+H-1}J(x_{k+1},u_k) &\\
     \text{s.t.}\qquad & x_{k+1}=f(x_k,u_k), &k\in\{t,\hdots,t+H-1\}\\
     & c(x_{t+\tau},\hat{y}_t^\tau)\ge LC_{t+1}^\tau,&\tau\in\{1,\hdots,H\}\label{eq:constC_2}\\
     & u_k \in \mathcal{U},x_{k+1} \in \mathcal{X},&k\in\{t,\hdots,t+H-1\} 
\end{align}
%\vspace{-1mm}
\end{subequations}
where $L$ is the Lipschitz constant of the constraint function $c$, $J$ is a step-wise cost function, and  $u_t,\hdots,u_{t+H-1}$ is the control sequence. The optimization problem in \eqref{eq:open_loop} is convex if the functions $J$ and $f$ are convex, the function $c$ is convex in its first argument, and the sets $\mathcal{U}$ and $\mathcal{X}$ are convex. 

Based on this optimization problem, we propose a receding horizon control strategy in Algorithm \ref{alg:overview}. In line 1 of Algorithm \ref{alg:overview}, we initialize the parameter $\delta_0^t$ simply to $\delta$. Lines 2-11 present the real-time planning loop by: 1) updating the states $x_t$ and $y_t$ and calculating new predictions $\hat{y}_{t}^\tau$ (lines 3-4), 2) computing the adaptive nonconformity scores $C_{t+1}^\tau$ (lines 5-9), and 3) solving the optimization problem in \eqref{eq:open_loop} of which we apply only $u_t$ (lines 10-11). 

\begin{algorithm}
    \centering
    \begin{algorithmic}[1]
        \Statex \textbf{Input: } Failure probability $\delta$, prediction horizon $H$, learning rate $\gamma$ 
        \Statex \textbf{Output: } Control input $u_t(x_t,y_0,\hdots,y_t)$ at each time $t$
        \State $\delta_0^\tau \gets \delta$ for $\tau\in\{1,\hdots,H\}$
        \FOR{$t$ from $0$ to $\infty$} \quad\# real-time motion planning loop
            \State Update $x_t$ and  $y_t$ 
            \State Obtain predictions $\hat{y}_t^\tau$ for $\tau\in\{1,\hdots,H\}$
            \FOR{$\tau$ from $1$ to $H$} \quad\# compute ACP regions
                \State $\delta_{t+1}^\tau \gets \delta_{t}^\tau+\gamma(\delta-e_{t}^\tau)$ 
                \State $R_{t}^\tau:=\|y_t-\hat{y}_{t-\tau}^\tau\|$
                \State $q \gets \big\lceil (t+1)(1-\delta_{t+1}^\tau)\big\rceil$
                \State Set $C_{t+1}^\tau$ as the $q$th smallest value of $(R_\tau^\tau,\hdots,R_t^\tau)$
            \ENDFOR
            \State Calculate controls $u_t,...,u_{t+H-1}$ as the solution of \eqref{eq:open_loop}
            \State Apply $u_t$ to \eqref{eq:system}
        \ENDFOR
    \end{algorithmic}
    \caption{MPC with ACP Regions}
    \label{alg:overview}
\end{algorithm}
%\vspace{-3mm}
\begin{remark}
    While Algorithm~\ref{alg:overview} uses a single learning rate, one can similarly extend the above algorithm to be fully adaptive using a candidate set of $\{\gamma_i\}_{1\leq i\leq k}$ without loss of generality.
\end{remark}
%\vspace{-5mm}
\begin{remark}
\cite{gibbs2021adaptive} assume that when $\delta_{t+1} \leq 0$, the prediction region $C_{t+1}\rightarrow \infty$. This means that when the algorithm requires robust behavior, the $\infty$-prediction region ensures that any prediction at the next time-step should be correctly classified. For a physical system, there are limits on how much the dynamic obstacle can accelerate in one time-step which gives us an upper bound $R_{\text{max}}< \infty$ on the worst-case error. In practice, we enforce $0\leq\delta_{t+1}\leq 1$ with $C_{t+1}\leq R_{\text{max}}$.
\end{remark}
\begin{theorem}\label{thm:2}
     Let $\gamma$ be a learning rate, $\delta_0^1\in (0,1)$ be an initial value for the recursion \eqref{eq:recursion}, and $T$ be the number of times that we compute the recursion \eqref{eq:recursion}. If the optimization problem \eqref{eq:open_loop} in Algorithm~\ref{alg:overview} is recursively feasible, then Algorithm~\ref{alg:overview} will lead to
     \begin{align}
         \frac{1}{T}\sum_{t=0}^
{T-1}\textrm{Prob}\big(c(x_{t+1},Y_{t+1})\ge  0\big) \geq 1-\delta - p_1
     \end{align}
     with constant $p_1:=\frac{\delta_0^1+ \gamma}{T\gamma}$ so that $\lim_{T\rightarrow\infty}p_1 =0$.
\end{theorem}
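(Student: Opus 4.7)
The plan is to reduce the claim about constraint satisfaction to the onestep-ahead coverage bound from Theorem~\ref{thm:1} via the Lipschitz continuity of $c$. First I would invoke the standing recursive feasibility assumption on \eqref{eq:open_loop} to conclude that, along the closed-loop trajectory produced by Algorithm~\ref{alg:overview}, the $\tau=1$ instance of constraint \eqref{eq:constC_2} holds at every iteration, i.e., $c(x_{t+1},\hat{y}_t^1)\ge L\,C_{t+1}^1$ almost surely for each $t\in\{0,\hdots,T-1\}$.

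Next, using Lipschitz continuity of $c$ in its second argument with constant $L$, I would write
\begin{align*}
    c(x_{t+1},Y_{t+1}) \ge c(x_{t+1},\hat{y}_t^1) - L\,\|Y_{t+1}-\hat{y}_t^1\|.
\end{align*}
Combining this with the MPC constraint above yields the almost sure event inclusion
\begin{align*}
    \bigl\{\|Y_{t+1}-\hat{y}_t^1\|\le C_{t+1}^1\bigr\} \;\subseteq\; \bigl\{c(x_{t+1},Y_{t+1})\ge 0\bigr\},
\end{align*}
since on the left-hand event we have $c(x_{t+1},Y_{t+1}) \ge L\,C_{t+1}^1 - L\,C_{t+1}^1 = 0$. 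Monotonicity of probability then gives $\textrm{Prob}(c(x_{t+1},Y_{t+1})\ge 0) \ge \textrm{Prob}(\|Y_{t+1}-\hat{y}_t^1\|\le C_{t+1}^1)$.

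Finally, I would average this inequality over $t\in\{0,\hdots,T-1\}$ and apply the left-hand inequality of \eqref{eq:thm1} in Theorem~\ref{thm:1}, which yields exactly the lower bound $1-\delta-p_1$ with $p_1=\frac{\delta_0^1+\gamma}{T\gamma}$, and the stated limit as $T\to\infty$ follows directly. The main subtlety, rather than a hard obstacle, is bookkeeping about what is random: $x_{t+1}$, $\hat{y}_t^1$, and $C_{t+1}^1$ are all random variables determined by the realizations $y_0,\hdots,y_t$ (through the MPC, the predictor, and the ACP update), so the Lipschitz inequality and the MPC constraint have to be combined pathwise before the event inclusion can be translated into a probability statement. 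Once this is handled, the proof is a one-line composition of Lipschitz continuity with Theorem~\ref{thm:1}.
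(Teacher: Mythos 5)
Your proposal is correct and follows essentially the same route as the paper's proof: recursive feasibility plus Lipschitz continuity of $c$ makes $\|Y_{t+1}-\hat{y}_t^1\|\le C_{t+1}^1$ a sufficient condition for $c(x_{t+1},Y_{t+1})\ge 0$, and then the onestep-ahead coverage bound of Theorem~\ref{thm:1} is averaged over $t$. The only cosmetic difference is that you pass from the event inclusion to the probability bound by monotonicity, whereas the paper spells out the same step via the law of total probability.
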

\begin{proof}
By assumption, the optimization problem in \eqref{eq:open_loop} is feasible at each time $t\in \{0,1,\hdots\}$. Due to constraint \eqref{eq:constC_2} and Lipschitz continuity of $c$, it hence holds that
\begin{align}\label{eq:proof2}
    0&\le c(x_{t+1},\hat{y}_{t}^1)- LC_{t+1}^1\le c(x_{t+1},Y_{t+1})+L\|Y_{t+1}-\hat{y}_{t}^1\|- LC_{t+1}^1
\end{align}
at each time $t\in \{0,1, \hdots\}$. Consequently, note that $\|Y_{t+1}-\hat{y}_{t}^1\|\le C_{t+1}^1$ is a sufficient condition for  $c(x_{t+1},Y_{t+1})\ge 0$. In a next step, we can derive that  
\begin{align*}
    \textrm{Prob}\big(c(x_{t+1},Y_{t+1})\ge 0\big) &\overset{(a)}{=}\textrm{Prob}\big(c(x_{t+1},Y_{t+1})\ge 0 \,\big\vert \, \lVert Y_{t+1}-\hat{y}_{t}^1\rVert\le C_{t}^1\big)\textrm{Prob}(\lVert Y_{t+1}-\hat{y}_{t}^1\rVert\le C_{t}^1)  \\ &\qquad \hspace{-0.75cm}+\textrm{Prob}\big(c(x_{t+1},Y_{t+1})\ge 0 \,\big\vert \, \lVert Y_{t+1}-\hat{y}_{t}^1\rVert> C_{t}^1\big)\textrm{Prob}(\lVert Y_{t+1}-\hat{y}_{t}^1\rVert> C_{t}^1) \\
    &\overset{(b)}{\geq}\textrm{Prob}\big(c(x_{t+1},Y_{t+1})\ge 0  \,\big\vert \, \lVert Y_{t+1}-\hat{y}_{t}^1\rVert\le C_{t}^1\big)\textrm{Prob}(\lVert Y_{t+1}-\hat{y}_{t}^1\rVert\le C_{t}^1) \\
    &\overset{(c)}{=} \textrm{Prob}(\lVert Y_{t+1}-\hat{y}_{t}^1\rVert\le C_{t}^1)
    % &= \mathbb{E}[1-e_{t+1}^1] = 1-\mathbb{E}[e_{t+1}^1].
    %\mathbb{P}(\|y_{t}-\hat{y}_{t-\tau}^\tau\|\le C_{t-\tau-1}^\tau \, | \, )
\end{align*}

\noindent where the equality in (a) follows from the law of total probability, while the inequality in (b)  follows from the nonnegativity of probabilities. The equality in (c) follows as $\textrm{Prob}(c(x_{t+1},Y_{t+1})\ge 0  \,\vert \, \lVert Y_{t+1}-\hat{y}_{t}^1\rVert\le C_{t}^1)=1$ since $\lVert Y_{t+1}-\hat{y}_{t}^1\rVert\le C_{t}^1$ implies $c(x_{t+1},Y_{t+1})\ge 0$ according to  \eqref{eq:proof2}. We now use the result from Theorem~\ref{thm:1} to complete the proof.
\end{proof}

\vspace{-5mm}
\section{Case Studies: Multirotor operating in small angle regime dodging a flying frisbee}

We compare the performance of the MPC with ACP uncertainty prediction regions with our past work that uses a distributionally robust approach to uncertainty quantification \cite[]{lcss_ours}. We use the same multirotor operating in the presence of a moving obstacle example with a MPC planner. The multirotor is constrained to operate within the state constraints
$\theta\in [-0.45,0.45]$~radians and $\varphi \in [-0.45,0.45]$ ~radians. We use the following standard multirotor linear dynamics,
\begin{align} \label{eq:sim_agent_dyn2}
 \ddot{x}= -g\theta, \, \,
       \ddot{y} = g\varphi,\,\, \ddot{z}=u_1 -g, \,\,
       \ddot{\varphi} = \frac{u_2}{I_{xx}},\,\, \ddot{\theta} = \frac{u_3}{I_{yy}},\,\, \ddot{\psi} = \frac{u_4}{I_{zz}},\vspace{-3mm}
\end{align}
\noindent where the planner control inputs $u_1, u_2, u_3, u_4$ correspond to the thrust force in the body frame and three moments. The vehicle's moments of inertia are $I_{xx} = 0.0075 kgm^2, I_{yy} = 0.0075 kgm^2, I_{zz} = 0.013kgm^2$. The MPC planner has a horizon length of 10 steps and the planner is updated at 20 Hz. It is implemented through a Sequential Convex Programming approach \cite[]{SCP1}.

% Numerical simulations of the proposed MPC planner with adaptive conformal prediction regions and dynamics \eqref{eq:sim_agent_dyn2} are presented as it avoids two differently behaved obstacles which are introduced once in each run. In both cases, we have linear predictions of a nonlinear dynamics.
% Case 1 is a thrown spherical obstacle with drag. In Case 2, a Frisbee is thrown at various initial positions, velocities, and rotation speed. The sphere dynamics are captured by a 6-state ODE with drag penalties proportional to its velocities. The Frisbee is modeled following~\cite{hummel2003frisbee}. 
Numerical simulations of the proposed MPC planner with ACP regions and dynamics \eqref{eq:sim_agent_dyn2} are presented as it avoids a Frisbee that is thrown at the drone from various initial positions, velocities, and rotation speed. The Frisbee is modeled following~\cite{hummel2003frisbee}, and we implement linear predictions of the trajectory arising from its nonlinear dynamics.
% Case 1 is a thrown spherical obstacle with drag. In Case 2, a Frisbee is thrown at various initial positions, velocities, and rotation speed. The sphere dynamics are captured by a 6-state ODE with drag penalties proportional to its velocities. 

We conducted 1000 Monte Carlo simulations per allowed failure probability level $\delta$ to compare the numerical feasibility, percentage of success in obstacle avoidance (if the MPC planner is feasible), and the planner's conservativeness, as measured by the minimum distance between the obstacle and agent centers, i.e., $\bar{d}_{min}$ and  $\sigma({d_{min}})$ describe the average and standard deviation of this minimum distance across simulations, respectively. We compare three uncertainty quantification techniques in Table~\ref{table:table1}, (1) The proposed ACP method (Algorithm \ref{alg:overview}), (2) empirical bootstrap prediction that accounts for the uncertainty in the predictions using the empirical bootstrap variance \cite[]{lcss_ours}, and (3) the sliding linear predictor with an Extended Kalman Filter (EKF) that approximates the uncertainty in the obstacle predictions as a Gaussian distribution (See Appendix~\ref{appendix:traj_pred}).
% (See \href{https://www.dropbox.com/sh/yg4aq7vmawq8v3e/AADIGwMtLXzx5JNQFjieANREa?dl=0&preview=Adaptive+Conformal+Prediction+for+Motion+Planning+among+Dynamic+Agents+(extended).pdf}{Appendix in extended version}). 

\textbf{Discussion: } Table~\ref{table:table1} shows that our proposed method can successfully avoid the Frisbee, while using a significantly smaller average divergence distance ($d_{min}, \,\sigma({d_{min}})$) from the Frisbee. I.e., our approach avoids the conservatism of other approaches due to the adaptivity of the uncertainty sets. % based on the predictor performance and 
% Practically, the quadropter can  perform more efficient avoidance maneuvers. We note that when the predictor was applied to obstacle motions governed by linear dynamics and Gaussian measurement noise, the EKF method provides comparable, if not better, results. 
Our method can usefully adjust the prediction sets when the underlying uncertainty distribution is shifting (due to discrepancy in the linear dynamic predicted and the true nonlinear obstacle motion). We also note that the feasibility of the MPC optimization is worse for our method compared to~\cite{lcss_ours} and the EKF predictor. This issue arises during sudden changes in the size of the uncertainty sets when the learning rate $\gamma$ is chosen too large. We will investigate this issue in future work by considering tools to ensure recursive feasibility~\cite[]{HEWING2020109095} or by providing backup controllers~\cite[]{drew2022backup,jesus2020faster} when the MPC is infeasible.
% \textcolor{red}{TO DO once the results are complete; discuss more agile maneuvers because of adaptive predictions that can account for predictor performance; discuss effect of gamma, discuss feasibility issues (cite papers that address recursive feasibility, backup controllers to address this issue - future work) }
\begin{table}
\begin{center}
\footnotesize
\begin{tabular}{ |c|c|ccc|ccc| } \hline 
  Case& $\delta$ &  &$0.025$&& &$0.05$&  \\ 
   &  UQ method &  Proposed & \cite{lcss_ours} & w/EKF & Proposed  & \cite{lcss_ours} & w/EKF \\\hline \hline
%   & $\%$Feas.                &87.8&92.7&88.0& 87.8&99.5&86.6   \\
% Ball & $\%$Succ.             &100&100&100   &100 &100&100  \\ 
% w/drag &$\overline{d}_{min}$ &3.44&7.30&3.11  &3.42 &2.60&2.73      \\ 
% &$\sigma(d_{min})$           &2.33&2.30& 2.25  &2.33 &1.08&2.39    \\ \hline  \hline
 & $\%$Feas.                &83.8&87.4&   97.1& 80.9&90.3&97.6  \\ 
Frisbee & $\%$Succ.    &99.2&100& 100 &100&100&100    \\ 
w/drag & $\overline{d}_{min}$ &2.91&14.2& 5.27 &2.74&4.97&4.25   \\ \
&$\sigma(d_{min})$            &1.25&2.04& 1.28 &1.3&1.97&1.11    \\ \hline 
\end{tabular}
\caption{Summary of results from MC simulations of system \eqref{eq:sim_agent_dyn2}. We used FACP for predicting uncertainty sets with learning rates $\gamma = \{0.0008,\, 0.0015,\, 0.003,\, 0.005,\, 0.009,\, 0.017,\,0.03,\,0.05,\,0.08,\, 0.13 \} $ and using the last $30$ measurements of the obstacle.}
\label{table:table1}
\normalsize
\end{center}
\vspace{-10mm}
\end{table}

\section{Conclusion}
We presented an algorithm for safe motion planning in an environment with other dynamic agents using ACP. Specifically, we considered a deterministic control system that uses state predictors to estimate the future motion of dynamic agents. We then  leveraged ideas from ACP to dynamically quantify prediction uncertainty from an online data stream, and designed an uncertainty-informed model predictive controller to safely navigate among dynamic agents. In contrast to other data-driven prediction models that quantify prediction uncertainty in a heuristic manner, we quantify the true prediction uncertainty in a distribution-free, adaptive manner that even allows to capture changes in prediction quality and the agents' motion.  
% \vspace{-5mm}
\acks{Lars Lindemann, Matthew Cleaveland, and George J. Pappas were generously supported by NSF award CPS-2038873. The work of Anushri Dixit and Skylar Wei was supported in part by DARPA, through the LINC program.}

\bibliography{literature}
\appendix
\section{Sliding linear predictor with Extended Kalman filter}~\label{appendix:traj_pred}

Given observations $\{y\}_{0}^t$ at the current time $t\ge 0$ of a discrete-time multivariate stochastic process, we assume the agent is governed by 
\begin{equation}
    z_{t+1} = f_d^{agent}(z_{t}),\quad \quad  y_t = h_d(z_{t}) + \xi_t
\end{equation}
where $f_d$ and $h_d$ are smooth (infinitely differentiable) functions that are the unknown states transition function in terms of full agent state $z_t$ and observation function that maps $z_t$ into observables (partial) $\{y\}_{0}^{t}$, respectively. The observables are corrupted by independent identically distributed Gaussian noise $\{\xi\}_{0}^{t}$ where each $\xi_i \in \mathcal{N}(0,\sigma^2)$.

Our goal is to obtain predictions  $(\hat{y}_t^1,\hdots,\hat{y}_t^H)$  at time $t$ of future agent states  $(Y_{t+1},\hdots,Y_{t+H})$ from past observations $(y_{0},\hdots,y_t)$ using the \textsc{Predict} function.

% Suppose we are given a single current observable of the system, $\gamma(\mathbold{x}(t))\in \mathbb{R}$, along with the past histories of the observable in the form of $\gamma(\mathbold{x}(t-\Delta t)),\gamma(\mathbold{x}(t-2\Delta t)),\cdots,\gamma(\mathbold{x}(t-L\Delta t))\} \in \mathbb{R}^{L-1}$ with lag or sample time $\Delta t$. 
Together, we let the vector $g_{0:L-1}^{(i)}\triangleq[
y_{0}^{(i)},y_{1}^{(i)},\cdots,y_{L-1}^{(i)}
]^{T} \in \mathbb{R}^{L}$ be the $L$-delay embedding the of $i^{th}$ measurable. As additional observable is acquired as time progresses, we can construct the trajectory matrix of the $i^{th}$ observable $\{y^{(i)}_0,\cdots,y^{(i)}_{N}\}$ or also known as the Hankel matrix:
\begin{align} \label{eq:hankel}
    H^{(i)}_{[L,N]} = \begin{bmatrix}
        g^{(i)}_{0:L-1} & g^{(i)}_{1:L} &\cdots  & g^{(i)}_{L:N-1}
        \end{bmatrix} = U\Sigma V^*
\end{align}
The matrix of left singular vectors $U = \begin{bmatrix}{\mu}_1, \cdots, {\mu}_L \end{bmatrix}$ is orthonormal. The principal components of $H^x_{L\times N}$ are the columns of $V$. 
To efficiently separate the noise and the true signal, we follow the work by \cite{variant_mssa} introduce the Page matrix representation of observables $\{y_0,\cdots,y_{TL-1}\}$. We construct and denote a $L$-embedding Page matrix as $P^{(i)}_{[L,TL]}$:
\begin{equation}
    P^{(i)}_{[L,TL]} = \begin{bmatrix} g^{(i)}_{0:L-1} & g^{(i)}_{L:2L-1} &\cdots  & g^{(i)}_{(T-1)L:TL-1}
    \end{bmatrix}=  U_P \sigma_P V_P^*
\end{equation}
Unlike Hankel matrices \eqref{eq:hankel}, Page matrices do not have repeated entries which enable us to leverage the result by \cite{optimalhardthreshold}, an optimal hard singular value threshold (optHSVT) algorithm (with respect to the Mean Squared Error) for any unknown $m\times n$ matrix corrupted by noise that is zero mean, identically and independently distributed.
% \begin{equation*}
%     \|P^{\hat{\mathbold{x}}}_{[L,TL]} - P^{\mathbold{x}}_{[L,TL]}\|_{F}^{2} = \sum_{i,j}\left(\left(P^{\hat{\mathbold{x}}}_{[L,TL]}\right)_{i,j} -\left(P^{\mathbold{x}}_{[L,TL]}\right)_{i,j}\right)^2
% \end{equation*}
In summary, the optHSVT algorithm provides $\sigma_{HSVT}$ that partitions the Page matrix as,
\begin{equation} \label{eq:page_noise_signal_separation}
    P^{(i)}_{[L,TL]} = \underbrace{\sum_{\rho = 1}^{n_{HSVT}} \sigma_\rho \mathbold{\mu}_\rho\mathbold{\nu}_\rho^T}_{\approx\,\, \mbox{signal}} + \underbrace{\sum_{\rho=n_{HSVT}+1}^{\min\{L, T\}}\sigma_\rho \mathbold{\mu}_\rho\mathbold{\nu}_\rho^T}_{\approx\,\, \mbox{noise}}.
\end{equation}
where $n_{HSVT}$ is the index of singular value of the logic statement $\sigma_\rho\left( P^{(i)}_{[L,TL]}\right) \geq \sigma_{HSVT}$.
Since both Hankel and Page construction shares the same rank, allowing us to use the Page matrix to recover the rank of the system and avoid ill-conditioned matrix inversion. As a result, we extract a linear predictor using the pseudo inverse $\Lambda_t = \hat{H}_{[L,2L]}^{(i),2:L} (\hat{H}_{[L,2L]}^{(i),1:L-1})^{\dagger}$ (similar to the minimum linear recurrence result in \cite{SSAbook}). In particular, we denote $$\hat{H}^{(i),2:L}_{[L,2L]} = \Big[ g^{(i)}_{t-2L:t-L-1},  g^{(i)}_{t-2L+1:t-L}, \cdots,  g^{(i)}_{[t-L:t-1]}\Big],$$  $$\hat{H}^{(i),1:L-1}_{[L,2L]} = \Big[ g^{(i)}_{t-2L+1:t-L}, g^{(i)}_{t-2L+1:t-L+1},\cdots, g^{(i)}_{[t-L+1:t]}\Big],$$ which are both $L\times L$ matrices. The $\hat{\cdot}$ operation is reconstructing the Hankel matrices with the first $n_{HSVT}$ singular eigenvector and eigenvalue pairs.  At each instance, a $H$ step prediction simply extracts the last $H$ elements of the last column of $(\Lambda_t)^{H}\hat{H}^{(i),2:L}_{[L,2L]}$. This linear predictor model will be updated instantly as new measurements are obtained. Further, we employ a standard Extended Kalman Filter (EKF) which allows us the incorporate the new measurements observed over time where the instantaneous $\Lambda_t$ is approximated as the Jacobian of the state transition function of the agent. 

%\section{Proof of Theorem \ref{thm:1}}~\label{appendix:proof1}

%\section{Proof of Theorem \ref{thm:2}}~\label{appendix:proof2}

% We recall the bound for the misclassification error from~\cite{gibbs2021adaptive} as
% \begin{align*}
% -\frac{\max(\delta_0^1,1-\delta_0^1)+ \gamma}{T\gamma} \leq \frac{1}{T}\sum_{t=1}^
% {T} e_{t+1}^1 -\delta \leq \frac{\max(\delta_0^1,1-\delta_0^1)+ \gamma}{T\gamma}.
% \end{align*}
% Hence, taking again the expectation of the above inequality, we get that
% \begin{align*}
%     &\frac{1}{T}\sum_{t=1}^
% {T} \mathbb{E}[e_{t+1}^1] -\delta \leq \frac{\max(\delta_0^1,1-\delta_0^1)+ \gamma}{T\gamma},\\
% \overset{(d)}{\Rightarrow}\;\;\;&\frac{1}{T}\sum_{t=1}^
% {T} \bigg(1-\textrm{Prob}\big(c(x_{t+1},Y_{t+1})\ge 0\big)\bigg)-\delta   \leq \frac{\max(\delta_0^1,1-\delta_0^1)+ \gamma}{T\gamma}, \\
% %\overset{(d)}{\Leftrightarrow}\;\;\;&\frac{1}{T}\sum_{t=1}^
% %{T}\textrm{Prob}\big(c(x_{t+1},Y_{+1})< 0\big) -\delta \leq \frac{\max(\delta_0^1,1-\delta_0^1)+ \gamma}{T\gamma},\\
% \Leftrightarrow\;\;\;& 1-\delta-\frac{\max(\delta_0^1,1-\delta_0^1)+ \gamma}{T\gamma} \leq \frac{1}{T}\sum_{t=1}^
% {T}\textrm{Prob}\big(c(x_{t+1},Y_{+1})\ge 0\big),
% \end{align*}
% where the implication in (d) follows from the previously shown fact that  $\textrm{Prob}(c(x_{t+1},Y_{t+1})\ge 0)\ge 1-\mathbb{E}[e_{t+1}^1]$.
\end{document}